\renewcommand\bibsection%
\newcommand{\lsize}{n}
\newcommand{\osize}{m}
\newcommand{\qsize}{q}
\newcommand{\odsize}{d}
\newcommand{\qdsize}{r}
\newcommand{\ispace}{\mathcal{X}}
\newcommand{\taskspace}{\mathcal{T}}
\newcommand{\objspace}{\mathcal{D}}
\newcommand{\anyspace}{\mathcal{X}}
\newcommand{\hypspace}{\mathcal{H}}
\newcommand{\kernelf}{k}
\newcommand{\gkernelf}{g}
\newcommand{\kkernelf}{\Gamma}
\newcommand{\dkernelm}{\bm{K}}
\newcommand{\tkernelm}{\bm{G}}
\newcommand{\kkernelm}{\bm{\Gamma}}
\newcommand{\regparam}{\lambda}
\newcommand{\idmatrix}{\bm{I}}
\newcommand{\trace}{\textnormal{tr}}
\newcommand{\transpose}{^\textnormal{T}}
\newcommand{\bm}[1]{\mathbf{#1}}
\newcommand{\ve}{\textnormal{vec}}
\newcommand{\predfun}{f}
\newcommand{\filterfun}{\varphi}
\DeclareMathOperator*{\argmin}{argmin}
\newtheorem{proposition}{Proposition}
\newtheorem{definition}{Definition}
\renewcommand*{\citep}{\cite}
\renewcommand*{\citet}{\cite}
\begin{document}

\title{A two-step learning approach for solving full and almost full cold start problems in dyadic prediction}

\author[1]{Tapio Pahikkala}
\author[2]{Michiel Stock}
\author[1]{Antti Airola}
\author[3]{Tero Aittokallio}
\author[2]{Bernard De Baets}
\author[2]{Willem Waegeman}
\affil[1]{University of Turku and Turku Centre for Computer Science, Joukahaisenkatu 3-5 B, FIN-20520, Turku, Finland, firstname.surname@utu.fi}
\affil[2]{Department of Mathematical Modelling, Statistics and Bioinformatics, Ghent University,
Coupure links 653, B-9000 Ghent, Belgium,
firstname.surname@UGent.be}
\affil[3]{Institute for Molecular Medicine Finland (FIMM), P.O. Box 20 (Tukholmankatu 8), FI-00014 University of Helsinki, Helsinki, Finland. firstname.surname@fimm.fi}

\date{}

\maketitle

\begin{abstract}
Dyadic prediction methods operate on pairs of objects (dyads), aiming to infer labels for out-of-sample dyads. We consider the full and almost full cold start problem in dyadic prediction, a setting that occurs when both objects in an out-of-sample dyad have not been observed during training, or if one of them has been observed, but very few times. A popular approach for addressing this problem is to train a model that makes predictions based on a pairwise feature representation of the dyads, or, in case of kernel methods, based on a tensor product pairwise kernel. As an alternative to such a kernel approach, we introduce a novel two-step learning algorithm that borrows ideas from the fields of pairwise learning and spectral filtering. We show theoretically that the two-step method is very closely related to the tensor product kernel approach, and experimentally that it yields a slightly better predictive performance. Moreover, unlike existing tensor product kernel methods, the two-step method allows closed-form solutions for training and parameter selection via cross-validation estimates both in the full and almost full cold start settings, making the approach much more efficient and straightforward to implement.   
\end{abstract}

\section{A subdivision of dyadic prediction methods}

Many real-world machine learning problems can be naturally represented as pairwise learning or dyadic prediction problems, for which feature representations of two different types of objects (aka a dyad) are jointly used to predict a relationship between those objects. Amongst others, applications of that kind emerge in biology (e.g. predicting protein-RNA interactions), medicine (e.g. design of personalized drugs), chemistry (e.g. prediction of binding between two types of molecules), social network analysis (e.g. link prediction) and recommender systems (e.g. personalized product recommendation). 

For many dyadic prediction problems it is extremely important to implement appropriate training and evaluation procedures. \citet{park2012flaws} make in a recent Nature-review on dyadic prediction an important distinction between four main settings. Given $\bm{t}$ and $\bm{d}$ as the feature representations of the two types of objects, those four settings can be summarized as follows:   
\begin{itemize}
\item {\bf Setting A}: Both $\bm{t}$ and $\bm{d}$ are observed during training, as parts of separate dyads, but the label of the dyad $(\bm{t},\bm{d})$ must be predicted. 
\item {\bf Setting B}: Only $\bm{t}$ is known during training, while $\bm{d}$ is not observed in any dyad, and the label of the dyad $(\bm{t},\bm{d})$ must be predicted.  
\item {\bf Setting C}: Only $\bm{d}$ is known during training, while $\bm{t}$ is not observed in any dyad, and the label of the dyad $(\bm{t},\bm{d})$ must be predicted.  
\item {\bf Setting D}: Neither $\bm{t}$ nor $\bm{d}$ occur in any training dyad, but the label of the dyad $(\bm{t},\bm{d})$ must be predicted (referred to as the full cold start problem). 
\end{itemize}

Setting A is of all four settings by far the most studied setting in the machine literature. Motivated by applications in collaborative filtering and link prediction, matrix factorization and related techniques are often applied to complete partially observed matrices, where missing values represent $(\bm{t},\bm{d})$ combinations that are not observed during training - see e.g. \cite{Koren2009} for a review.

Settings B and C are very similar, and a variety of machine learning methods can be applied for these settings. From a recommender systems viewpoint, those settings resemble the cold start problem (new user or new item), for which hybrid and content-based filtering techniques are often applied -- see e.g.\ \cite{Adams2010,Fang2011, menon2010loglinear, Shan2010,Zhou2012} for a not at all exhaustive list. From a bioinformatics viewpoint, Settings B and C are often analyzed using graph-based methods that take the structure of a biological network into account -- see e.g.\ \cite{Schrynemackers2013} for a recent review. When the features of $\bm{t}$ are negligible or unavailable, while those of $\bm{d}$ are informative, Setting B can be interpreted as a multi-label classification problem (binary labels), a multivariate regression problems (continuous labels) or a specific multi-task learning problem. Here as well, a large number of applicable methods exists in the literature. 

\subsection{The problem setting considered in this article}
Matrix factorization and hybrid filtering strategies are not applicable to Setting D. We will refer to this setting as the \emph{full cold start} problem, which finds important applications in domains such as bioinformatics and chemistry -- see experiments. Compared to the other three settings, Setting D has received less attention in the literature (with some exceptions, see e.g. \cite{park2009pairwise,menon2010loglinear,pahikkala2010conditional,pahikkala2013efficient}), and it will be our main focus in this article. Furthermore, we will also investigate the transition phase between Settings C and D, when $\bm{t}$ occurs very few times in the training dataset, while $\bm{d}$ of the dyad $(\bm{d},\bm{t})$ is only observed in the prediction phase. We refer to this setting as the \emph{almost full cold start} problem. 

Full and almost full cold start problems can only be solved by considering feature representations of dyads (aka side information in the recommender systems literature). Similar to several existing papers dealing with Setting D, we will consider tensor product feature representations and their kernel duals.  Such feature representations have been successfully applied in order to solve problems such as product recommendation \cite{basilico2004unifying, park2009pairwise}, prediction of protein-protein interactions \cite{Benhur2005,Kashima2009linkprob}, drug design \cite{Jacob2008}, prediction of game outcomes \cite{pahikkala2010reciprocalkm} and document retrieval \cite{pahikkala2013efficient}. For classification and regression problems a standard recipe exists of plugging pairwise kernels in support vector machines, kernel ridge regression (KRR), or any other kernel method. Efficient optimization approaches based on gradient descent \cite{park2009pairwise,Kashima2009linkprob,pahikkala2013efficient} and closed form solutions \cite{pahikkala2013efficient} have been introduced. In our theoretical and experimental analysis we will compare KRR with a tensor product pairwise kernel to the two-step approach that we introduce in this paper.  

\subsection{Formulation as a transfer learning problem}
As discussed above, dyadic prediction is closely related to several subfields of machine learning. Further on in this article we decide to adopt a multi-task learning or transfer learning terminology, using $\bm{d}$ and $\bm{t}$ to denote the feature representations of instances and tasks, respectively. From this viewpoint, Setting C corresponds to a specific instantiation of a traditional transfer learning scenario, in which the aim is to transfer knowledge obtained from already learned \emph{auxiliary tasks} to the \emph{target task} of interest \citet{pan2010surveytransfer}. Stretching the concept of transfer learning even further, in the case of so-called \emph{zero-data learning}, one arrives at Setting D, which is characterized by no available labeled training data for the target task \citep{Larochelle2008zerodata}. If the target task is unknown during the training time, the learning method must be able to generalize to it ``on the fly'' at prediction time. The only available data here is coming from auxiliary training tasks.   

We present a simple but elegant two-step approach to tackle these settings. First, a KRR model trained on auxiliary tasks is used to predict labels for the related target task. Next, a second model is constructed, using KRR on the target data, augmented by the predictions of the first phase. We show via spectral filtering that this approach is closely related to learning a pairwise model using a tensor product pairwise kernel. However, the two-step approach is much simpler to implement and it allows more heterogeneous transfer learning settings than the ordinary pairwise kernel ridge regression, as well as a more flexible model selection. Furthermore, it allows for a more efficient generalization to new tasks not known during training time, since the model built on auxiliary tasks does not need to be re-trained in such settings. In the experiments we consider three distinct dyadic prediction problems, concerning drug-target, newsgroup document similarity and protein functional similarity predictions. Our results show that the two-step transfer learning approach can be highly beneficial when there is no labeled data at all, or only a small amount of labeled data available for the target task, while in settings where there is a significant amount of labeled data available for the target task a single-task model suffices. In related work, \cite{Schrynemackers2014} have recently proposed a similar two-step approach based on tree-based ensemble methods for biological network inference.

\section{Solving full and almost full cold start problems via transfer learning}

\newcommand{\objset}{D}
\newcommand{\taskset}{T}

\newcommand{\krf}{\bm{X}}
\newcommand{\eq}[1]{\begin{align*}#1\end{align*}}
\newcommand{\eqn}[1]{\begin{align}#1\end{align}}

Adopting a multi-task learning methodology, the training set is assumed to consist of a set $\{\bm{x}_h\}_{h=1}^{\lsize}$ of object-task pairs and a vector $\bm{y}\in\mathbb{R}^\lsize$ of their real-valued labels. We assume that each training input can be represented as $\bm{x}=(\bm{d},\bm{t})$, where $\bm{d}\in\objspace$ and $\bm{t}\in\taskspace$ are the objects and tasks, respectively, and $\objspace$ and $\taskspace$ are the corresponding spaces of objects and tasks. Moreover, let $\objset=\{\bm{d}_i\}_{i=1}^{\osize}$ and $\taskset=\{\bm{t}_j\}_{j=1}^{\qsize}$ denote, respectively, the sets of distinct objects and tasks encountered in the training set with $\osize=\arrowvert\objset\arrowvert$ and $\qsize=\arrowvert\taskset\arrowvert$. We say that the training set is \emph{complete} if it contains every object-task pair with object in $\objset$ and task in $\taskset$ exactly once. For complete training sets, we introduce a further notation for the matrix of labels $\bm{Y}\in\mathbb{R}^{\osize\times\qsize}$, so that its rows are indexed by the objects in $\objset$ and the columns by the tasks in $\taskset$. In full and almost full cold start prediction problems, this matrix will not contain any target task info.

\subsection{Kernel ridge regression with tensor product kernels}
\citet{baldassarre2012multioutput} and several other authors (see \citet{alvarez2012review} and references therein) have extended KRR to involve task correlations via matrix-valued kernels. However, most of the literature concerns kernels for which the tasks are fixed at training time. An alternative approach, allowing the generalization to new tasks more straightforwardly, is to use the tensor product pairwise kernel \cite{basilico2004unifying,oyama2004using,Benhur2005,park2009pairwise,Hayashi2012,Bonilla2012,pahikkala2013efficient}, in which kernels are defined on object-task pairs
\eqn{\label{pairwisekernel}
\kkernelf(\bm{x},\overline{\bm{x}}) = \kkernelf\left(\left(\bm{d},\bm{t}\right),\left(\overline{\bm{d}},\overline{\bm{t}}\right)\right)=\kernelf\left(\bm{d},\overline{\bm{d}}\right)\gkernelf\left(\bm{t},\overline{\bm{t}}\right)
}
as a product of the data kernel $\kernelf$ and task kernel $\gkernelf$. Given that $\bm{K}\in\mathbb{R}^{\osize\times\osize}$ and $\bm{G}\in\mathbb{R}^{\qsize\times\qsize}$ are the kernel matrices for the data points and tasks, respectively, the kernel matrix for the object-task pairs is, for a complete training set, the tensor product $\kkernelm=\bm{K}\otimes\bm{G}$. If the training set is not complete, the kernel matrix is a principal sub-matrix of $\kkernelm$. Pairwise KRR seeks for a prediction function of type
\eq{
\predfun(\bm{x}) = \sum_{i=1}^{\lsize}
\alpha_i \kkernelf(\bm{x},\bm{x}_i) \,,
}
where $\alpha_i$ are parameters that minimize the following objective function:
\begin{align}\label{tikhonov}
J(\bm{\boldsymbol\alpha})=(\bm{\kkernelm}\bm{\boldsymbol\alpha}-\bm{y})\transpose(\bm{\kkernelm}\bm{\boldsymbol\alpha}-\bm{y})+\regparam\bm{\boldsymbol\alpha}\transpose\kkernelm\bm{\boldsymbol\alpha},
\end{align}
whose minimizer can be found by solving the following system of linear equations:
\eqn{\label{kronsystem}
\left(\kkernelm+\regparam\idmatrix\right)\bm{\boldsymbol\alpha}=\bm{y}.
}
Several authors have pointed out that, while the size of the above system is considerably large, its solution can be found efficiently via tensor algebraic optimization \citep{vanloan2000ubiquitous,martin2006shiftedkron,Kashima2009linkprob,Raymond2010scalable,pahikkala2010conditional,alvarez2012review}. Namely, the complexity scales roughly of order $O(\arrowvert\objset\arrowvert^3+\arrowvert\taskset\arrowvert^3)$ which is required by computing the singular value decomposition (SVD) of both the object and task kernel matrices, but the complexities can be scaled down even further by using sparse kernel matrix approximations.

However, the above computational short-cut only concerns the case in which the training set is complete. If some of the pairs are missing or if there are several occurrences of certain pairs, one has to resort, for example, to gradient descent based training approaches. While these approaches can also be accelerated via tensor algebraic optimization, they still remain considerably slower than the SVD-based approach. A serious short-coming of the approach is that when generalizing to new tasks, the whole training procedure needs to be re-done with the new training set that contains the union of the auxiliary data and the target data. If the amount of auxiliary data is large, as one would hope in order to expect any positive transfer to happen, this makes generalization to new tasks on-the-fly computationally impractical.

\subsection{Two-step kernel ridge regression}

\begin{algorithm}[t]
  \begin{algorithmic}[1]
    \State $\bm{C} \gets \textnormal{argmin}_{\bm{C}\in\mathbb{R}^{\osize\times\qsize}}\left\{\Arrowvert\bm{C}\tkernelm-\bm{Y}\Arrowvert_F^2+\regparam_t\trace(\bm{C}\tkernelm\bm{C}\transpose)\right\}$
    \State $\bm{z}\gets\left(\bm{z}_\mathcal{L}\transpose,(\bm{C}_\mathcal{U}\bm{g})\transpose\right)\transpose$
    \State $\bm{a}\gets\argmin_{\bm{a}\in\mathbb{R}^{\osize}} \left\{ (\dkernelm\bm{a}-\bm{z})\transpose(\dkernelm\bm{a}-\bm{z})+\regparam_d\bm{a}\transpose\dkernelm\bm{a} \right\}$
    \State \textbf{return} $\predfun_\bm{t}(\cdot)=\sum_{i=1}^\osize {a}_i\kernelf(\bm{d}_i,\cdot)$
  \caption{Two-step kernel ridge regression}
  \label{algtwostep}
  \end{algorithmic}
\end{algorithm}

Next, we present a two-step procedure for performing transfer learning. In the following, we assume that we are provided a training set in which every auxiliary task has the same labeled training objects. This assumption is fairly realistic in many practical settings, since one can carry out, for example, a preliminary completion step by using the extensive toolkit of missing value imputation or matrix completion algorithms. A newly given target task, in contrast, is assumed to have only a subset of the training objects labeled. That is, the training set consisting of both the auxiliary and the target tasks is incomplete, because of the missing object labels of the target task, ruling out the direct application of the SVD-based training.
To cope with this incompleteness, we consider an approach of performing the learning in two steps, of which the first step is used for completing the training set for the target tasks part (almost full cold start) and the second step for building a model for the target task (full cold start). A particular benefit of the approach is that the first phase where a model is trained on auxiliary data needs to be performed only once, and the resulting model may be subsequently re-used when new target tasks appear.

Let $\mathcal{L}\subseteq\objset$ and $\mathcal{U}\subseteq\objset$ be the set of objects that are, respectively, labeled and unlabeled for the target task. Moreover, let $\bm{Y}$ now denote the matrix of labels for the auxiliary tasks and $\bm{z}_\mathcal{L}\in\mathbb{R}^{\arrowvert\mathcal{L}\arrowvert}$ the vector of known labels for the target task. Furthermore, let $\bm{g}\in\mathbb{R}^{\qsize}$ denote the vector of task kernel evaluations between the target task and the auxiliary tasks, e.g.
$\bm{g}=\left(\gkernelf(\bm{t},\bm{t}_1),\ldots,\gkernelf(\bm{t},\bm{t}_\qsize)\right)\transpose$, where $\bm{t}$ is the target task and $\bm{t}_i$ the auxiliary tasks.
Finally, let $\regparam_t$ and $\regparam_d$ be the regularization parameters for the first and the second learning steps, respectively.
The two-step approach is summarized in Algorithm~1.
The first training step (line 1) can be carried out by training a multi-label KRR model, in which a matrix $\bm{C}$ of parameters is estimated. The second step (lines 2-4) employs a single-label KRR, in which a vector $\bm{a}$ of parameters is fitted to the data.

\subsection{Computational considerations and model selection}\label{compsection}

\begin{algorithm}[t]
{\small
  \begin{algorithmic}[1]
    \Require $\bm{Y}\in\mathbb{R}^{\osize\times\qsize},\bm{\Phi}\in\mathbb{R}^{\osize\times\odsize},\bm{\Psi}\in\mathbb{R}^{\qsize\times\qdsize},\bm{g}\in\mathbb{R}^{\qsize},\bm{z}_\mathcal{L}\in\mathbb{R}^{\arrowvert\mathcal{L}\arrowvert}$ with $\odsize\leq\osize$ and $\qdsize\leq\qsize$.
    \State $\bm{U},\sqrt{\bm{\Sigma}},\bm{V}\gets\textnormal{SVD}(\bm{\Phi})$, with $\bm{U}\in\mathbb{R}^{\osize\times\odsize}$, $\bm{V}\in\mathbb{R}^{\odsize\times\odsize}$ \Comment{$\mathcal{O}(\qsize\qdsize^2)$} \label{svdline1}
    \State $\bm{P},\sqrt{\bm{S}},\bm{Q}\gets\textnormal{SVD}(\bm{\Psi})$, with $\bm{P}\in\mathbb{R}^{\qsize\times\qdsize},\bm{Q}\in\mathbb{R}^{\qdsize\times\qdsize}$\Comment{$\mathcal{O}(\osize\odsize^2)$} \label{svdline2}
    \State $e\gets\infty$
    \For{$\overline{\regparam_t}\in\{\textnormal{Grid of parameter values}\}$} \label{firstloopstart}
      \For{$j=1,\ldots,\qsize$}
         $\phantom{W}\widetilde{G}_{j,j}\gets\bm{P}_j(\textnormal{diag}((\bm{S}+\overline{\regparam_t}\idmatrix)^{-1})\odot\bm{P}_j\transpose)$\Comment{$\mathcal{O}(\qsize\qdsize)$}
      \EndFor
      \State $\overline{\bm{C}}\gets\bm{Y}\bm{P}(\bm{S}+\overline{\regparam_t}\idmatrix)^{-1}\bm{P}\transpose$\Comment{$\mathcal{O}(\osize\qsize\qdsize)$}
      \For{$i=1,\ldots,\osize$ \textbf{and} $j=1,\ldots,\qsize$} $\overline{R}_{i,j}\gets Y_{i,j} - \left(\widetilde{G}_{j,j}\right)^{-1}\overline{C}_{i,j}$ \Comment{$\mathcal{O}(\osize\qsize)$}
      \EndFor
      \State $\overline{e}\gets\mathcal{E}(\overline{\bm{R}},\bm{Y})$\Comment{Error between labels and LOO predictions}
      \If{$\overline{e}<e$}$\phantom{W}\regparam_t,e,\bm{R},\bm{C}\gets\overline{\regparam_t},\overline{e},\overline{\bm{R}}, \overline{\bm{C}}$
      \EndIf \label{firstloopend}
    \EndFor
    \State $e\gets\infty$
    \For{$\overline{\regparam_d}\in\{\textnormal{Grid of parameter values}\}$} \label{secondloopstart}
      \For{$i=1,\ldots,\osize$}
         $\phantom{W}\widetilde{K}_{i,i}\gets\bm{U}_i(\textnormal{diag}((\bm{\Sigma}+\overline{\regparam_d}\idmatrix)^{-1})\odot\bm{U}_i\transpose)$\Comment{$\mathcal{O}(\osize\odsize)$}
      \EndFor
      \State $\overline{\bm{A}}\gets\bm{U}(\bm{\Sigma}+\overline{\regparam_d}\idmatrix)^{-1}\bm{U}\transpose\bm{R}$\Comment{$\mathcal{O}(\osize\qsize\odsize)$}
      \For{$i=1,\ldots,\osize$ \textbf{and} $j=1,\ldots,\qsize$} $\overline{T}_{i,j}\gets Y_{i,j} - \left(\widetilde{K}_{i,i}\right)^{-1}\overline{A}_{i,j}$ \Comment{$\mathcal{O}(\osize\qsize)$}
      \EndFor
      \State $\overline{e}\gets\mathcal{E}(\overline{\bm{T}},\bm{Y})$\Comment{Error between labels and LOO predictions}
      \If{$\overline{e}<e$}$\phantom{W}\regparam_d,e,\bm{T},\bm{A}\gets\overline{\regparam_d},\overline{e},\overline{\bm{T}}, \overline{\bm{A}}$
      \EndIf \label{secondloopend}
    \EndFor
    \State $\bm{z}\gets\left(\bm{z}_\mathcal{L}\transpose,(\bm{C}_\mathcal{U}\bm{g})\transpose\right)\transpose$
    \State $\bm{a}\gets\bm{U}(\bm{\Sigma}+\regparam_d\idmatrix)^{-1}\bm{U}\transpose\bm{z}$\Comment{$\mathcal{O}(\osize\odsize)$}
    \State \textbf{return} $\predfun_\bm{t}(\cdot)=\sum_{i=1}^\osize\bm{a}_i\kernelf(\bm{d}_i,\cdot)$
  \caption{Two-step with LOOCV-based automatic model selection}
  \label{algloocv}
  \end{algorithmic}}
\end{algorithm}

Let $\odsize$ and $\qdsize$ denote the feature space dimensionalities of the object and task kernels, respectively. These dimensions can be reduced, for example, by the Nystr{\"o}m method in order to lower both the time and space complexities of kernel methods \citep{schoelkopf1999inputspace}, and hence in the following we assume that $\odsize\leq\osize$ and $\qdsize\leq\qsize$. Let $\bm{\Phi}\in\mathbb{R}^{\osize\times\odsize}$ and $\bm{\Psi}\in\mathbb{R}^{\qsize\times\qdsize}$ be the matrices containing the feature representations of the training objects and tasks in $\objset$ and $\taskset$, respectively, so that $\bm{\Phi}\bm{\Phi}\transpose=\bm{K}$ and $\bm{\Psi}\bm{\Psi}\transpose=\bm{G}$. Let $\bm{\Phi}=\bm{U}\sqrt{\bm{\Sigma}}\bm{V}\transpose$ and $\bm{\Psi}=\bm{P}\sqrt{\bm{S}}\bm{Q}\transpose$ be the SVDs of $\bm{\Phi}$ and $\bm{\Psi}$, respectively. Since the ranks of the feature matrices are at most the dimensions of the feature spaces, we can save both space and time by only computing the singular vectors that correspond to the nonzero singular values. That is, we compute the matrices $\bm{U}\in\mathbb{R}^{\osize\times\odsize}$, $\bm{V}\in\mathbb{R}^{\odsize\times\odsize}$, $\bm{P}\in\mathbb{R}^{\qsize\times\qdsize}$, and $\bm{Q}\in\mathbb{R}^{\qdsize\times\qdsize}$ via the economy sized SVD, requiring $\mathcal{O}(\osize\odsize^2+\qsize\qdsize^2)$ time.
The outcomes of the first and second steps of the two-step KRR (e.g. the first and third lines of Algorithm~1) can be, respectively, written as $\bm{C}=\bm{Y}\widetilde{\bm{G}}$ and $\bm{a}=\widetilde{\bm{K}}\bm{z}$, where $\widetilde{\bm{G}}=(\bm{G}+\regparam_t\idmatrix)^{-1}=\bm{U}(\bm{\Sigma}+\regparam_d\idmatrix)^{-1}\bm{U}\transpose$ and $\widetilde{\bm{K}}=(\bm{K}+\regparam_d\idmatrix)^{-1}=\bm{U}(\bm{\Sigma}+\regparam_d\idmatrix)^{-1}\bm{U}\transpose$. Given that the above described SVD components are available, the computational complexity is dominated by the multiplication of the eigenvectors with the label matrix, which requires $\mathcal{O}(\osize\qsize\qdsize)$ time if the matrix multiplications are performed in the optimal order.

We next present an automatic model selection and training approach for the two-step KRR that uses leave-one-out cross-validation (LOOCV) for selecting the values of both $\regparam_t$ and $\regparam_d$. This is illustrated in Algorithm~\ref{algloocv}. It is well known that, for KRR, the LOOCV performance can be efficiently computed without training the model from scratch during each CV round (we refer to \citet{rifkin2007notes} for details). Adapting this to the first step of the two-step KRR, the ``leave-column-out'' performance for the $i$th datum on the $j$th task (e.g. a CV in which each of the columns of $\bm{Y}$ are held out at a time to measure the generalization ability to new columns) can be obtained in constant time from
$
Y_{i,j} - \left(\widetilde{G}_{j,j}\right)^{-1}C_{i,j},
$
given that the diagonal entries of $\widetilde{\bm{G}}$ and the dual variables $C_{i,j}$ are computed and stored in memory. Using the SVD components, both $\widetilde{G}_{j,j}$ and $C_{i,j}$ can be computed in $\mathcal{O}(\qdsize)$ time, which enables the efficient selection of the regularization parameter value with LOOCV. If the value is selected from a set of $t$ candidates and LOOCV is computed for all data points and tasks, the overall complexity is $\mathcal{O}(\osize\qsize\qdsize t)$. This is depicted in lines \ref{firstloopstart}-\ref{firstloopend} of Algorithm~\ref{algloocv}, where the overline symbols denote temporary variables used in the search of the optimal candidate value and $\mathcal{E}$ denotes a prediction performance.

By the definition of the two-step KRR, the second step consists of training a model using the predictions made during the first step as training labels, while the aim is to make good predictions of the true labels. Therefore, we select the regularization parameter value for the second step using LOOCV on a multi-label KRR model trained using the LOO prediction matrix $\bm{R}$ obtained from the first step as a label matrix. The second regularization parameter value is thus selected so that the error $\mathcal{E}(\overline{\bm{T}},\bm{Y})$ between the LOO predictions made during the second step and the original labels $\bm{Y}$ is as small as possible. In contrast to the first step, the aim of the second step is to generalize to new data points, and hence the CV is done in the leave-row-out sense, which can again be efficiently computed as $Y_{i,j} - \left(\widetilde{K}_{i,i}\right)^{-1}A_{i,j}$, where $A_{i,j}$ are the model parameters of the multi-label KRR trained row-wise. This is done in lines~\ref{secondloopstart}-\ref{secondloopend} of Algorithm~\ref{algloocv}.
 
The overall computational complexity of the two-step KRR with automatic model selection is $\mathcal{O}(\osize\odsize^2+\qsize\qdsize^2 + \osize\qsize\qdsize t + \osize\qsize\odsize t)$, where the first two terms denote the time required by SVD computations and the two latter the time spent for CV and grid search for the regularization parameter. 
The two-step KRR, in addition to enabling non-zero training sets for the target task, provides a very flexible machinery for CV and model selection. This is in contrast to the ordinary KRR with pairwise tensor product kernels for which such short-cuts are not available to our knowledge, and there is no efficient closed form solution available for the almost full cold start settings. Note also that, while the above described method separately selects the regularization parameter values for the tasks and the data, the method is easy to modify so that it would select a separate regularization parameter value for each task and for each datum (e.g. altogether $\osize+\qsize$ parameters), thus allowing considerably more degrees of freedom. However, the consideration of this variation is omitted due to the lack of space.

\section{Theoretical considerations}

Here, we analyze the two-step learning approach by studying its connections to learning with pairwise tensor product kernels of type (\ref{pairwisekernel}). These two approaches coincide in an interesting way for full cold start problems, a special case in which the target task has no labeled data at all. This, in turn, allows us to show the consistency of the two-step KRR via its universal approximation and spectral regularization properties.

The connection between the two-step and pairwise KRR is characterized by the following result. 
\begin{proposition}
Let us consider a full cold start setting with a complete training set. Let $\predfun_\bm{t}(\cdot)$ be a model trained with two-step KRR for the target task $\bm{t}$ and $\predfun(\cdot, \cdot)$ be a model trained with an ordinary least-squares regression on the object-task pairs with the following pairwise kernel function on $\objspace\times\taskspace$:
\eqn{\label{twostepkernel}
\kkernelf\left(\left(\bm{d},\bm{t}),(\overline{\bm{d}},\overline{\bm{t}}\right)\right)
=\left(\kernelf\left(\bm{d},\overline{\bm{d}}\right)
+\regparam_d\delta\left(\bm{d},\overline{\bm{d}}\right)\right)
\left(\gkernelf\left(\bm{t},\overline{\bm{t}})
+\regparam_t\delta\left(\bm{t},\overline{\bm{t}}\right)\right)\right)
}
where $\delta$ is the delta kernel whose value is 1 if the arguments are equal and 0 otherwise.
Then, $\predfun_\bm{t}(\bm{d})=\predfun(\bm{t}, \bm{d})$ for any $\bm{d}\in\objspace$.
\end{proposition}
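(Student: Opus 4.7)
The plan is to derive closed forms for both predictors and show they coincide. Start with the two-step KRR in the full cold start regime, where $\mathcal{L}=\emptyset$ and $\mathcal{U}=\objset$. The first line of Algorithm~\ref{algtwostep} is a standard multi-label KRR whose minimizer is $\bm{C}=\bm{Y}(\tkernelm+\regparam_t\idmatrix)^{-1}$. The second line reduces to $\bm{z}=\bm{C}\bm{g}$ since the labeled block is empty, and the third gives $\bm{a}=(\dkernelm+\regparam_d\idmatrix)^{-1}\bm{z}$. Composing these,
\eq{
\predfun_{\bm{t}}(\bm{d})
=\bm{k}_{\bm{d}}\transpose(\dkernelm+\regparam_d\idmatrix)^{-1}\bm{Y}(\tkernelm+\regparam_t\idmatrix)^{-1}\bm{g},
}
where $\bm{k}_{\bm{d}}=(\kernelf(\bm{d}_1,\bm{d}),\ldots,\kernelf(\bm{d}_\osize,\bm{d}))\transpose$.

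Next, I would compute the ordinary-least-squares pairwise predictor in closed form using Kronecker algebra. Because the training set is complete and $\delta(\bm{d}_i,\bm{d}_{i'})=\delta_{ii'}$, $\delta(\bm{t}_j,\bm{t}_{j'})=\delta_{jj'}$ on training dyads, the Gram matrix induced by~(\ref{twostepkernel}) factors exactly as $\kkernelm=(\dkernelm+\regparam_d\idmatrix)\otimes(\tkernelm+\regparam_t\idmatrix)$, up to the ordering induced by $\ve$. Solving $\kkernelm\boldsymbol\alpha=\bm{y}$ via $(\bm{A}\otimes\bm{B})^{-1}=\bm{A}^{-1}\otimes\bm{B}^{-1}$ together with the identity $(\bm{B}\otimes\bm{A})\ve(\bm{X})=\ve(\bm{A}\bm{X}\bm{B}\transpose)$ gives, in matrix form, $\bm{A}=(\dkernelm+\regparam_d\idmatrix)^{-1}\bm{Y}(\tkernelm+\regparam_t\idmatrix)^{-1}$.

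Finally, I would evaluate $\predfun(\bm{t},\bm{d})$ at a test dyad of the full cold start regime, where $\bm{t}\notin\taskset$ and $\bm{d}$ is disjoint from $\objset$. Every delta contribution between the test dyad and any training dyad then vanishes, so the test-time pairwise kernel values reduce to $\kernelf(\bm{d},\bm{d}_i)\gkernelf(\bm{t},\bm{t}_j)$ and assemble into $\bm{k}_{\bm{d}}\otimes\bm{g}$. The prediction becomes $\predfun(\bm{t},\bm{d})=\bm{k}_{\bm{d}}\transpose\bm{A}\,\bm{g}$, and substituting the formula for $\bm{A}$ reproduces exactly the two-step expression above.

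The main technical hurdle is the Kronecker bookkeeping: one must pick a consistent $\ve$-ordering, invoke $(\bm{A}\otimes\bm{B})^{-1}=\bm{A}^{-1}\otimes\bm{B}^{-1}$, and reshape back to a matrix identity. The conceptual point worth emphasizing is the asymmetric role of the delta kernel, which acts as a Tikhonov shift on the training Gram matrix but vanishes at genuinely new test inputs; this asymmetry is precisely what makes the unregularized pairwise least-squares model with kernel~(\ref{twostepkernel}) emulate the doubly regularized two-step KRR.
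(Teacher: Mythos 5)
Your proposal is correct and follows essentially the same route as the paper's own proof: derive the closed form $\bm{a}=(\dkernelm+\regparam_d\idmatrix)^{-1}\bm{Y}(\tkernelm+\regparam_t\idmatrix)^{-1}\bm{g}$ for the two-step model, factor the pairwise Gram matrix as the Kronecker product $(\tkernelm+\regparam_t\idmatrix)\otimes(\dkernelm+\regparam_d\idmatrix)$, invert it with the standard Kronecker and $\ve$ identities, and match the two predictions. If anything, you are slightly more explicit than the paper on the one delicate point, namely that the delta-kernel terms act as a Tikhonov shift only on the training Gram matrix and vanish when evaluating at a genuinely new test dyad, which is what makes the unregularized pairwise least-squares prediction reduce to $\bm{k}_{\bm{d}}\transpose\widetilde{\bm{K}}\bm{Y}\widetilde{\bm{G}}\bm{g}$.
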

\begin{proof}
 Writing the steps of the algorithm together and denoting $\widetilde{\bm{G}}=\left(\bm{G}+\regparam\idmatrix\right)^{-1}$ and $\widetilde{\bm{K}}=\left(\bm{K}+\regparam\idmatrix\right)^{-1}$, we observe that the model parameters $\bm{a}$ of the target task can also be obtained from the following closed form:
\eqn{
\bm{a}&=\widetilde{\bm{K}}
\bm{Y}\widetilde{\bm{G}}\bm{g}
\,.
}
The prediction for a datum $\bm{d}$ is $\predfun_\bm{t}(\bm{d})=\bm{k}\transpose\bm{a}$, where $\bm{k}\in\mathbb{R}^\osize$ is the vector containing all kernel evaluations between $\bm{d}$ and the training data points.

The kernel matrix corresponding to the complete training set of auxiliary tasks can be expressed as the following tensor product:
$
\kkernelm = \left(\bm{G}+\regparam_t\idmatrix\right)\otimes\left(\bm{K}+\regparam_d\idmatrix\right)\,.
$
The regression problem being
\[
\bm{\boldsymbol\alpha}=\argmin_{\bm{\boldsymbol\alpha}\in\mathbb{R}^{\osize\qsize}}\left\{\left(\ve(\bm{Y})-\kkernelm\bm{\boldsymbol\alpha}\right)\transpose\left(\ve(\bm{Y})-\kkernelm\bm{\boldsymbol\alpha}\right)\right\}\,,
\]
its minimizer can be expressed as
\eqn{
\bm{\boldsymbol\alpha}&=\kkernelm^{-1}\ve(\bm{Y})
=\left(\left(\bm{G}+\regparam_t\idmatrix\right)^{-1}\otimes\left(\bm{K}+\regparam_d\idmatrix\right)^{-1}\right)\ve(\bm{Y})\nonumber\\
&=\ve\left(\left(\bm{K}+\regparam_d\idmatrix\right)^{-1}\bm{Y}\left(\bm{G}+\regparam_t\idmatrix\right)^{-1}\right)
=\ve\left(\widetilde{\bm{K}}\bm{Y}\widetilde{\bm{G}}\right)\;. \label{twostepsol}
}
The prediction for the datum $\bm{d}$ is $(\bm{g}\otimes\bm{k})\transpose\ve\left(\widetilde{\bm{K}}\bm{Y}\widetilde{\bm{G}}\right)=\bm{k}\transpose\widetilde{\bm{K}}\bm{Y}\widetilde{\bm{G}}\bm{g}$.
\qed
\end{proof}

The kernel point of view allows us to consider the universal approximation properties of the learned knowledge transfer models. Recall the concept of universal kernel functions:
\begin{definition} \label{kerneluniversalitydef}
{\bf \citet{Steinwart2002consistency}} A continuous kernel $\kernelf$ on a compact metric space $\anyspace$ (i.e. $\anyspace$ is closed and bounded) is called universal if the reproducing kernel Hilbert space (RKHS) induced by $\kernelf$ is dense in $C(\anyspace)$, where $C(\anyspace)$ is the space of all continuous functions $\predfun : \anyspace \rightarrow \mathbb{R}$.
\end{definition}
The universality property indicates that the hypothesis space induced by an universal kernel can approximate any continuous function to be learned arbitrarily well, given that the available set of training data is large and representative enough, and the learning algorithm can efficiently find the approximation \citet{Steinwart2002consistency}.

\begin{proposition}
The kernel $\kkernelf$ on $\objspace\times\taskspace$ defined in (\ref{twostepkernel}) is universal if the kernels $\kernelf$ on $\objspace$ and $\gkernelf$ on $\taskspace$ are both universal.
\end{proposition}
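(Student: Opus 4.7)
The plan is to decompose the kernel $\kkernelf$ as a sum whose leading summand is the tensor product $\kernelf\otimes\gkernelf$, and then invoke two known facts: (i) the tensor product of two universal kernels on compact metric spaces is universal on the product space (a classical result of \citet{Steinwart2002consistency}, see also Steinwart and Christmann's monograph), and (ii) if a kernel is written as $\kkernelf_1+\kkernelf_2$ with both summands positive semidefinite (PSD), then the RKHS of $\kkernelf_1$ is contained in the RKHS of $\kkernelf_1+\kkernelf_2$, so density of the former implies density of the latter.

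First I would expand the product in (\ref{twostepkernel}):
\begin{align*}
\kkernelf\bigl((\bm{d},\bm{t}),(\overline{\bm{d}},\overline{\bm{t}})\bigr)
&=\kernelf(\bm{d},\overline{\bm{d}})\gkernelf(\bm{t},\overline{\bm{t}})
+\regparam_d\delta(\bm{d},\overline{\bm{d}})\gkernelf(\bm{t},\overline{\bm{t}})\\
&\quad+\regparam_t\kernelf(\bm{d},\overline{\bm{d}})\delta(\bm{t},\overline{\bm{t}})
+\regparam_d\regparam_t\delta(\bm{d},\overline{\bm{d}})\delta(\bm{t},\overline{\bm{t}}).
\end{align*}
The delta kernel is PSD, and a product of PSD kernels on a product space is again PSD, so each of the last three summands is PSD. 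Hence $\kkernelf-\kernelf\otimes\gkernelf$ is PSD, and by the standard sum-of-kernels construction of RKHSs (Aronszajn) the Hilbert space induced by $\kernelf\otimes\gkernelf$ embeds isometrically into the Hilbert space induced by $\kkernelf$.

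Next I would invoke (i): since $\kernelf$ and $\gkernelf$ are universal on the compact metric spaces $\objspace$ and $\taskspace$, the tensor product kernel $\kernelf\otimes\gkernelf$ is universal on $\objspace\times\taskspace$, meaning its RKHS is dense in $C(\objspace\times\taskspace)$. Because this dense subspace is contained in the RKHS of $\kkernelf$, the RKHS of $\kkernelf$ is itself dense in $C(\objspace\times\taskspace)$, which is exactly the definition of universality. This completes the argument modulo one technical point.

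The main obstacle is the continuity requirement built into Definition~\ref{kerneluniversalitydef}: the delta kernel is continuous only when the underlying space carries the discrete topology, so strictly speaking one has to either restrict attention to spaces on which $\delta$ is continuous (e.g.\ finite or otherwise discrete $\objspace$ and $\taskspace$, as is common in the cold-start dyadic prediction setting), or interpret the delta summand as a formal regularization term so that $\kkernelf$ is understood as $\kernelf\otimes\gkernelf$ plus a PSD perturbation and density in $C(\objspace\times\taskspace)$ is the property of interest. Once this continuity caveat is resolved, the decomposition and sum-of-RKHS argument above deliver the result with no further computation.
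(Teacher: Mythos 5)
Your proof is correct and rests on the same two facts as the paper's sketch --- Aronszajn's characterization of the RKHS of a sum of kernels (so that adding a PSD kernel preserves density of the RKHS in $C(\objspace\times\taskspace)$) and the universality of products of universal kernels --- but it applies them in the opposite order, with a different decomposition. The paper keeps the kernel in factored form, argues via Aronszajn that each factor $\kernelf+\regparam_d\delta$ and $\gkernelf+\regparam_t\delta$ is universal on its own space, and only then invokes the product-of-universals result; you instead expand the product into four PSD summands, single out $\kernelf\otimes\gkernelf$ as the universal core via the product result, and absorb the three $\delta$-containing terms as a PSD perturbation via Aronszajn. Your ordering has the small advantage that the product-of-universals lemma is only ever applied to the genuinely continuous kernels $\kernelf$ and $\gkernelf$, rather than to factors containing the delta kernel. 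The continuity caveat you raise at the end is a real issue that the paper passes over in silence: $\delta$ is discontinuous unless the underlying space is discrete, so Definition~\ref{kerneluniversalitydef} does not literally apply to $\kkernelf$, and your proposed reading (a universal continuous kernel plus a PSD perturbation, with density of the RKHS in $C(\objspace\times\taskspace)$ as the target property) is the honest fix. One minor correction: the inclusion of the RKHS of $\kernelf\otimes\gkernelf$ into that of $\kkernelf$ furnished by Aronszajn's theorem is norm-non-increasing rather than isometric; since only the set inclusion is needed for the density argument, this does not affect your conclusion.
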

\begin{proof}
We provide here a high-level sketch of the proof. The details are omitted due to lack of space but they can be easily verified from the existing literature.
The RKHS of sums of reproducing kernels was characterized by \citet{aronszajn1950} as follows:
Let $\hypspace(\kernelf_1)$ and $\hypspace(\kernelf_2)$ be RKHS’s over $\anyspace$ with reproducing kernels $\kernelf_1$ and $\kernelf_2$,
respectively. If $\kernelf = \kernelf_1 + \kernelf_2$ and
$\hypspace(\kernelf)$ denotes the corresponding RKHS,
then
$\hypspace(\kernelf) = \left\{\predfun_1 + \predfun_2 : \predfun_i \in \hypspace(\kernelf_i) , i = 1, 2\right\}$.
Thus, if the object kernel is universal, the sum of the object and delta kernels is also universal and the same concerns the task kernel. The product of two universal kernels is also universal, as considered in our previous work \citep{waegeman2012learninggraded}.\qed
\end{proof}

The full cold start setting with complete auxiliary training set allows us to consider the two-step approach from the spectral filtering regularization point of view \citep{gerfo2008spectral}, an approach that has recently gained some attention due to its ability to study various types of regularization approaches under the same framework. Continuing from (\ref{kronsystem}), we observe that
\[
\bm{\boldsymbol\alpha}=\filterfun_\regparam(\kkernelm)\ve(\bm{Y})=\bm{W}\filterfun_\regparam(\bm{\Lambda})\bm{W}\transpose\ve(\bm{Y}),
\]
where $\kkernelm = \bm{W}\bm{\Lambda}\bm{W}\transpose$ is the eigen decomposition of the kernel matrix $\kkernelm$ and $\filterfun_\regparam$ is a filter function, parameterized by $\regparam$, such that if $\bm{v}$ is an eigenvector of $\kkernelm$ and $\sigma$ is its corresponding eigenvalue, then $\kkernelm\bm{v}=\filterfun_\regparam(\sigma)\bm{v}$.
The filter function corresponding to the Tikhonov regularization being
$
\filterfun_\regparam(\bm{\sigma})=\frac{1}{\bm{\sigma}+\regparam}\,,
$
and the ordinary least-squares approach corresponding to the $\regparam=0$ case, several other learning approaches, such as spectral cut-off and gradient descent, can also be expressed as filter functions, but which cannot be expressed as a penalized empirical error minimization problem analogous to (\ref{tikhonov}).

The eigenvalues of the kernel matrix obtained with the tensor product kernel on a complete training set can be expressed as the tensor product $\bm{\Lambda}=\bm{\Sigma}\otimes\bm{S}$ of the eigenvalues $\bm{\Sigma}$ and $\bm{S}$ of the object and task kernel matrices.
Now, instead of considering the two-step learning approach from the kernel point of view, one can also cast it into the spectral filtering regularization framework, resulting to the following filter function:
\eqn{\label{twostepfilter}
\filterfun_\regparam(\sigma)=\frac{1}{(\sigma_1+\regparam_t)(\sigma_2+\regparam_d)}
=\frac{1}{\sigma_1\sigma_2+\regparam_d\sigma_1+\regparam_t\sigma_2+\regparam_t\regparam_d}\,,
}
where $\sigma_1,\sigma_2$ are the factors of $\sigma$, namely eigenvalues of $\bm{K}$ and $\bm{G}$. This differs from the Tikhonov regularization only by the two middle terms in the denominator if one sets $\regparam=\regparam_t\regparam_d$. In the experiments, we observe that this difference is rather small also in practical cases, making the two-step learning approach a viable alternative for pairwise KRR with ordinary tensor product kernels.

In the following, we assume that the kernel is bounded, that is, there exists $\kappa>0$ such that $\sup_{\bm{x}\in\ispace}\sqrt{\kkernelf(\bm{x},\bm{x})}\leq\kappa$, indicating that the eigenvalues of kernel matrices are in $[0,\kappa^2]$. To further analyze the above filter functions, we follow \citet{Bauer2007regularization,gerfo2008spectral,baldassarre2012multioutput} and say that a function $\filterfun_\regparam:[0,\kappa^2]\rightarrow \mathbb{R}, 0 < \regparam \leq \kappa^2$, parameterized by $0<\regparam\leq\kappa^2$, is an admissible regularizer if there exists constants $D, B,\gamma\in\mathbb{R}$ and $\bar{\nu},\gamma_\nu > 0$ such that
\[
\sup_{0<\sigma\leq\kappa^2}\arrowvert\sigma\filterfun_\regparam(\sigma)\arrowvert\leq D\textnormal{, }
\sup_{0<\sigma\leq\kappa^2}\arrowvert\filterfun_\regparam(\sigma)\arrowvert\leq\frac{B}{\regparam}\textnormal{, }
\sup_{0<\sigma\leq\kappa^2}\arrowvert 1-\sigma\filterfun_\regparam(\sigma)\arrowvert\leq \gamma\;,
\]
\[
\textnormal{and }\sup_{0<\sigma\leq\kappa^2}\arrowvert 1-\sigma\filterfun_\regparam(\sigma)\arrowvert\sigma^\nu\leq \gamma_\nu\regparam^\nu,\forall\nu\in(0,\bar{\nu}].
\]
The admissibility, in turn, ensures that
\eqn{\label{consistency}
R(\hat{\predfun}^\regparam)-\textnormal{inf}_{\predfun\in\hypspace}R(\predfun)
 = \mathcal{O}\left(\lsize^{-\frac{\bar{\nu}}{2\bar{\nu}+1}}\right)
}
holds with high probability, where $R$ denotes the expected prediction error with respect to some unknown probability measure $\rho(\bm{x}, y)$ on the joint space $\ispace\times\mathbb{R}$ of inputs and labels that is,
$
R(\predfun) = \int_{\ispace\times\mathbb{R}}(\predfun(\bm{x})- y)^2 d\rho(\bm{x}, y)\;.
$
We refer to \citet{Bauer2007regularization,gerfo2008spectral,baldassarre2012multioutput} for a detailed consideration and further results.
It is straightforward to see that, analogously to the Tikhonov regularization, the admissibility of the function (\ref{twostepfilter}) is confirmed by $D, B,\gamma, \gamma_\nu,\bar{\nu}=1$ for arbitrary factorizations of $\regparam=\regparam_t\regparam_d$ and $\sigma=\sigma_1\sigma_2$ such that $\regparam_t,\regparam_d>0$ and $\sigma_1,\sigma_2\geq 0$. Thus, function (\ref{twostepfilter}) can be considered under the spectral filtering regularization framework with separate regularization parameter values for objects and tasks.
The universality of the kernel ensures that $\textnormal{inf}_{\predfun\in\hypspace}R(\predfun)$ in (\ref{consistency}) is the error of the underlying regression function to be learned, and the admissibility of the regularizer ensures that $R(\hat{\predfun}^\regparam)$ converges to it when the size of the training set approaches infinity, guaranteeing the consistency of the two-step KRR method.

\section{Experiments}

In the experiments, we compare different types of transfer learning settings in solving three dyadic prediction problems: drug-target, document similarity and protein similarity prediction. We simulate the full and almost full cold start problem as follows. In each experiment, one drug, document or protein is considered to be the target task in question, where the task is to predict the interactions of drugs or similarities of documents or proteins with respect to the target. Further, other tasks formed in the same way are provided as auxiliary information, leading to a full cold start or almost full cold start setting. The experiments are performed 100 times with different training/test set splits, the performances are averages over all repetitions and over all target tasks. The performance is measured using the concordance index \citep{gonen2005concordance} (C-index), also known as the pairwise ranking accuracy
$\frac{1}{|\{(i,j)| y_i > y_j \}|}\sum_{y_i > y_j}H(\hat{y}_i - \hat{y}_j)$, where $y_i$ denote the true and $\hat{y}_i$ the predicted labels, and $H$ is the Heaviside step function. The regularization parameter selection is performed using LOOCV on the training data. For the two-step approach, we select the first regularization parameter via LOOCV on the auxiliary tasks, and the second one via LOOCV on the target task data augmented with predictions from the first step. The implementation of the algorithms used in the experiments will be made available in the RLScore open source machine learning library\footnote{ Available at \url{https://github.com/aatapa/RLScore}}.

The drug-target interaction prediction data\footnote{\url{http://users.utu.fi/aatapa/data/DrugTarget}} \citep{davis2011comprehensive,pahikkala2014realistic} consists of 68 drug compounds and 442 protein targets. The kernel between the drugs is based on the 3D Tanimoto coefficient similarity, and the sequence similarity between the protein targets was computed using the normalized version of the Smith-Waterman score. Further, for each drug-protein pair we have a real-valued label, negative logarithm of the kinase disassociation constant $K_d$, that characterizes the  interaction affinity between the drug and target in question. In each experiment, the task of interest corresponds to one of the drugs in the data set. The goal is to learn to predict for the given drug the $K_d$ values for proteins unseen during the training phase. The performances are always computed over a testing set of 192 protein targets for a given task, i.e. we assess whether for a given target we can discriminate between proteins with more or less affinity for this drug. 

For each task, we vary the number of available training proteins, from 5 to 250. In addition, we have available the training data for the 250 training proteins for the 67 auxiliary tasks. As summarized in Figure~\ref{ref:relationMatrices}, we evaluate a number of different approaches:
\begin{figure}[t]
{\begin{center}
\includegraphics[width=0.3\linewidth]{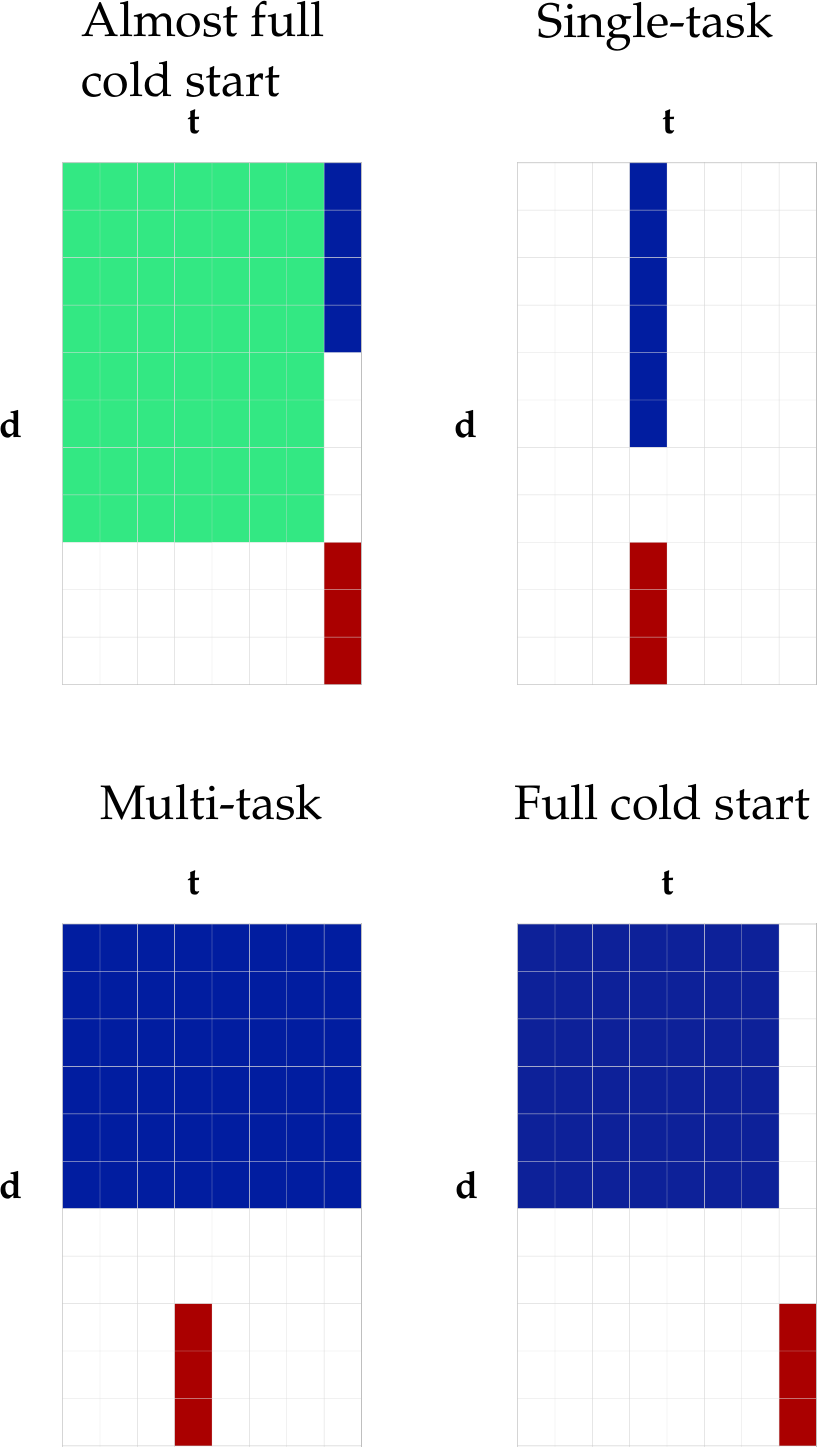}
\end{center}
}
\caption{Overview of the approaches investigated in this article. Green = training data of which the size is constant in the experiments. Blue = training data of which the number of objects varies over different experiments. Red = test data. See text for details.}
\label{ref:relationMatrices}
\end{figure}
\begin{itemize}\setlength{\itemsep}{%
        -0.1\baselineskip plus 1pt minus 1pt%
      }%
\item Single-task: use only training data from target task (traditional regression setting, tackled with KRR)
\item Multi-task: both target task and auxiliary tasks have same amount of training data available (multi-output learning leveraging task correlations, tackled with pairwise tensor product KRR) 
\item Full cold start: no data available for the target task (tackled with pairwise tensor product KRR and two-step KRR)
\item Almost full cold start: use a varying amount of data from the target task, and all the available data from auxiliary tasks (tackled with two-step KRR)
\end{itemize}
We do not consider the  pairwise KRR in the almost full cold start experiment due to computational considerations, as unlike for the two-step approach no closed-form solution exists for the method in this setting, and the iterative conjugate gradient based method has rather poor scalability.

\begin{figure}[t]
        \centering
   \includegraphics[width=0.3\textwidth]{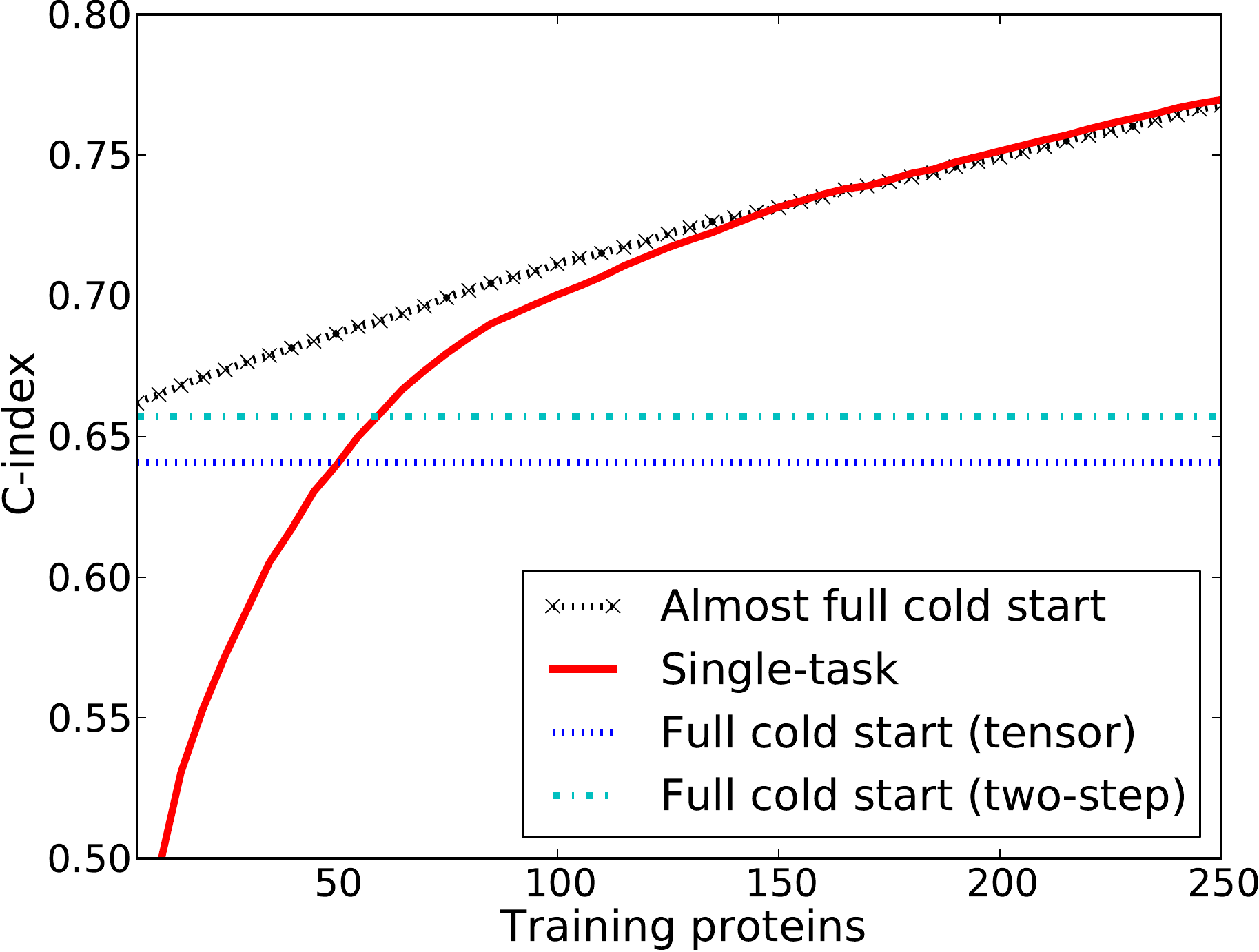}
   \includegraphics[width=0.3\textwidth]{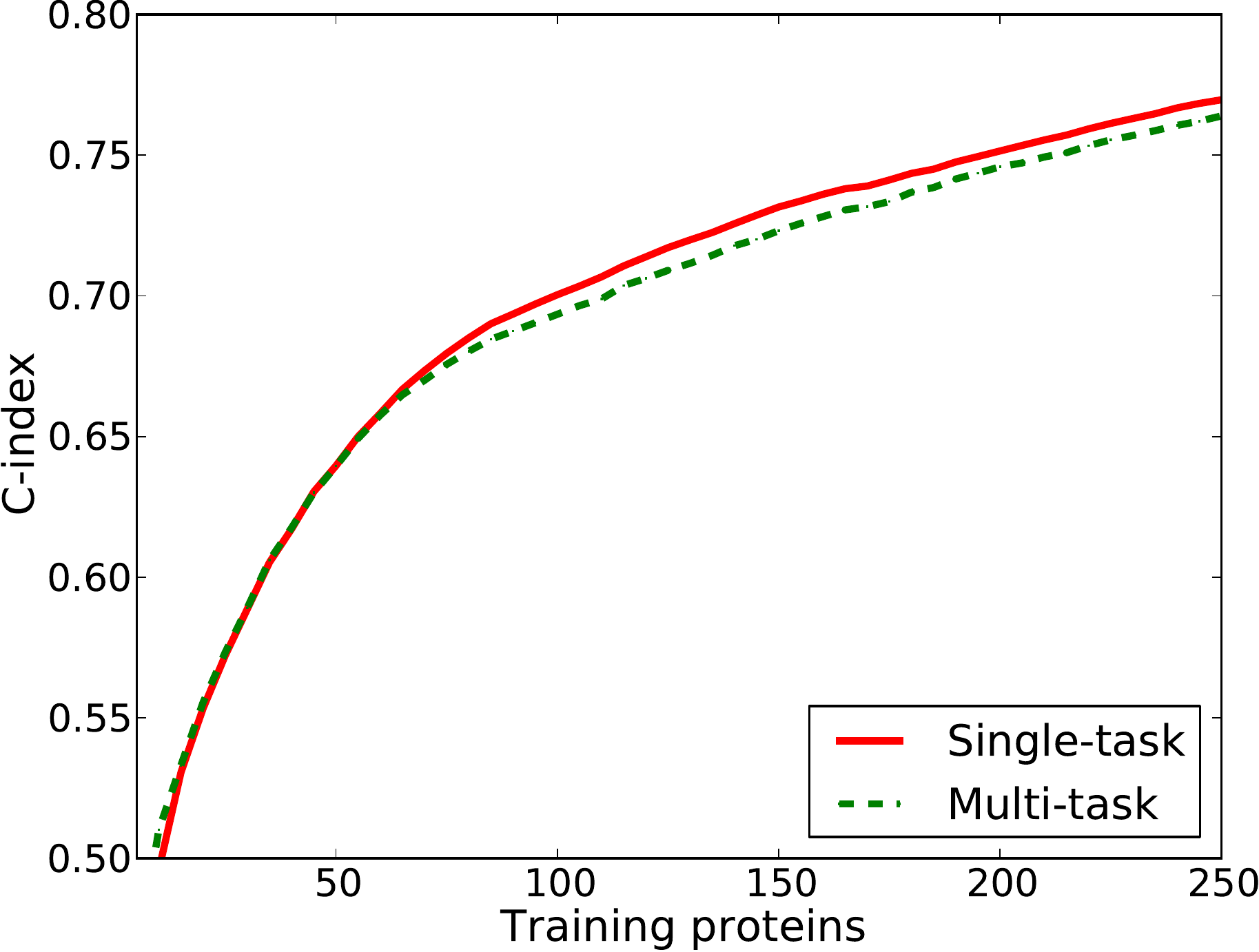}
   \includegraphics[width=0.3\textwidth]{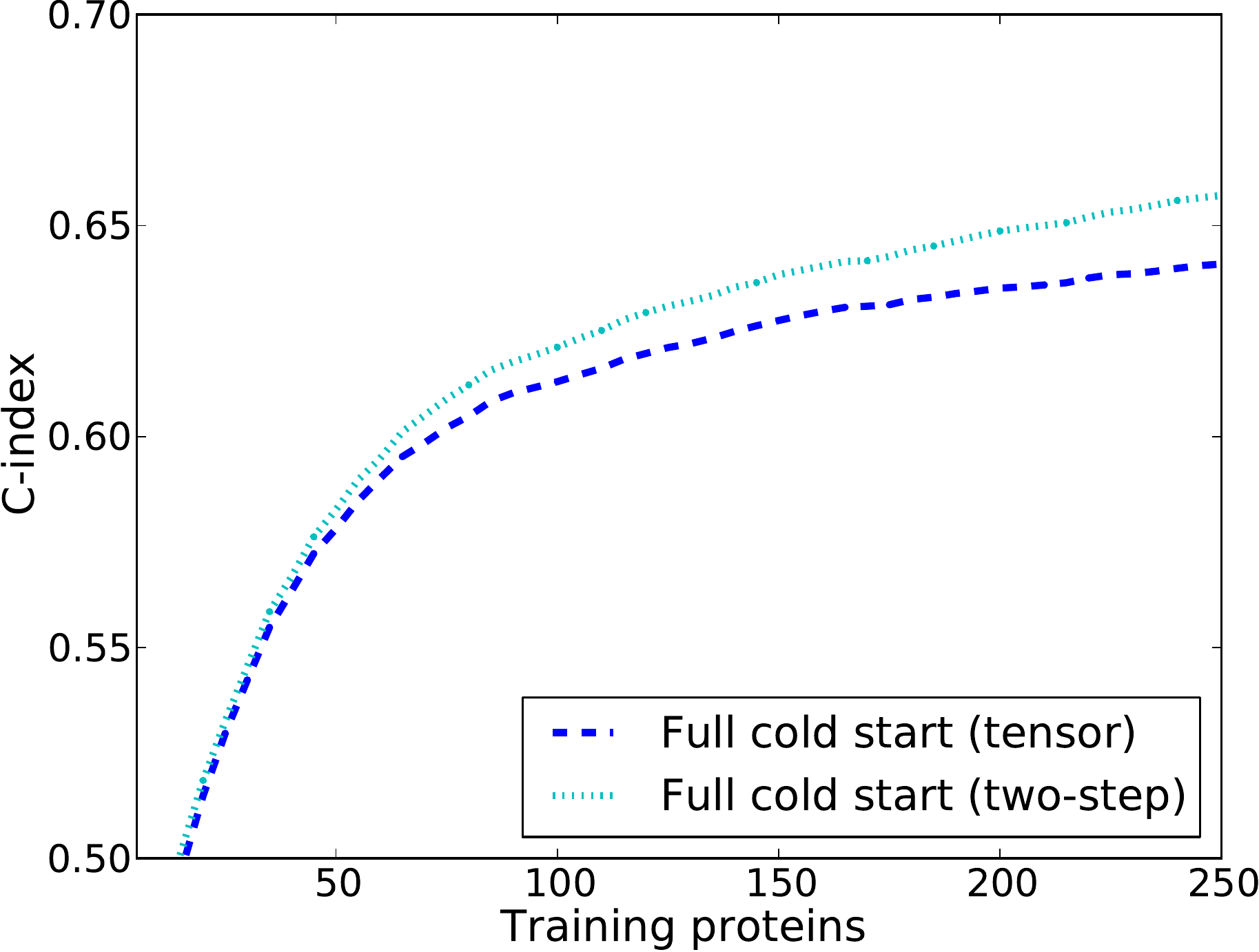}
       
\caption{Learning curves for the drug-target data. Left: target data increased, Middle: target and auxiliary data increased, Right: auxiliary data increased.}
\label{fig:drugtarget}
\end{figure}

In Figure~\ref{fig:drugtarget}, we present the results for the drug-target experiments. In Figure~\ref{fig:drugtarget}~(a) we present an experiment, where all the 67 auxiliary tasks have available the data for all 250 training proteins, and the amount of data available for the target task is varied. It can be seen that learning is possible even in the full cold start setting, where both two-step KRR and pairwise KRR perform much better than randomly. The single-task approach begins to outperform the full cold start setting after the point when one has access to a bit more than 50 training proteins. Combining these two sources of information leads to the best performance up until 150 training proteins. However, once there is enough data available for the target task, there is no longer any positive transfer from the auxiliary tasks.

In Figure~\ref{fig:drugtarget}~(b) we consider the setting, where there is the same amount of data available for both the auxiliary tasks and the target tasks. This setting corresponds closely to the 
traditional multi-output regression problem, the exception being that only the label for the target task is of interest during testing. Here we can see that the multi-task method that uses the task correlation information fails to outperform the simple single-task approach, suggesting that on this type of data one requires significantly more data in the auxiliary tasks compared to the target tasks in order for it to be helpful for learning.

In Figure~\ref{fig:drugtarget}~(c) we consider the full cold start learning setting, while increasing the amount of data available for the auxiliary tasks. Here we observe that the simple two-step approach slightly outperforms pairwise KRR, possibly due to the property that it allows regularizing the drugs and the targets separately. Both approaches generalize to the unknown target task, though the results are still much worse than when having significant amount of data for the target task.

Further, we compare all the considered learning settings on the 20 Newsgroups data\footnote{\url{http://qwone.com/~jason/20Newsgroups/}}. Here, given any target document, the goal is to predict the similarity of other documents with respect to it. This constitutes a three-level ordinal regression task, where documents from the same newsgroup as the target receive the highest rating, documents from similar newsgroups the second highest, and documents from dissimilar newsgroups the lowest rating. These similarities are assigned according to the taxonomy available at the data set web site. The documents are represented using bag-of-words features together with a linear kernel. In the experiments the number of target domain data ranges from 50 to 1500 documents (transfer learning, single-task, multi-task methods), and the number of auxiliary tasks and data available for each either ranges from 50 to 1500 documents (multi-task, full cold start learning), or stays fixed at 2000 documents (transfer learning).

The results are presented in Figure~\ref{ref:figure4}. For the transfer learning approaches, already the starting point of 50 target domain documents suffices to reach a performance that is as good as the single-task method with at least 1500 documents. The multi-task learning setting does not outperform the single-task setting, and while learning is possible in the full cold start setting, some target task data is still required to reach a high predictive performance. Two-step learning slightly outperforms pairwise KRR.

\begin{figure}
\begin{center}
\includegraphics[width=0.4\linewidth]{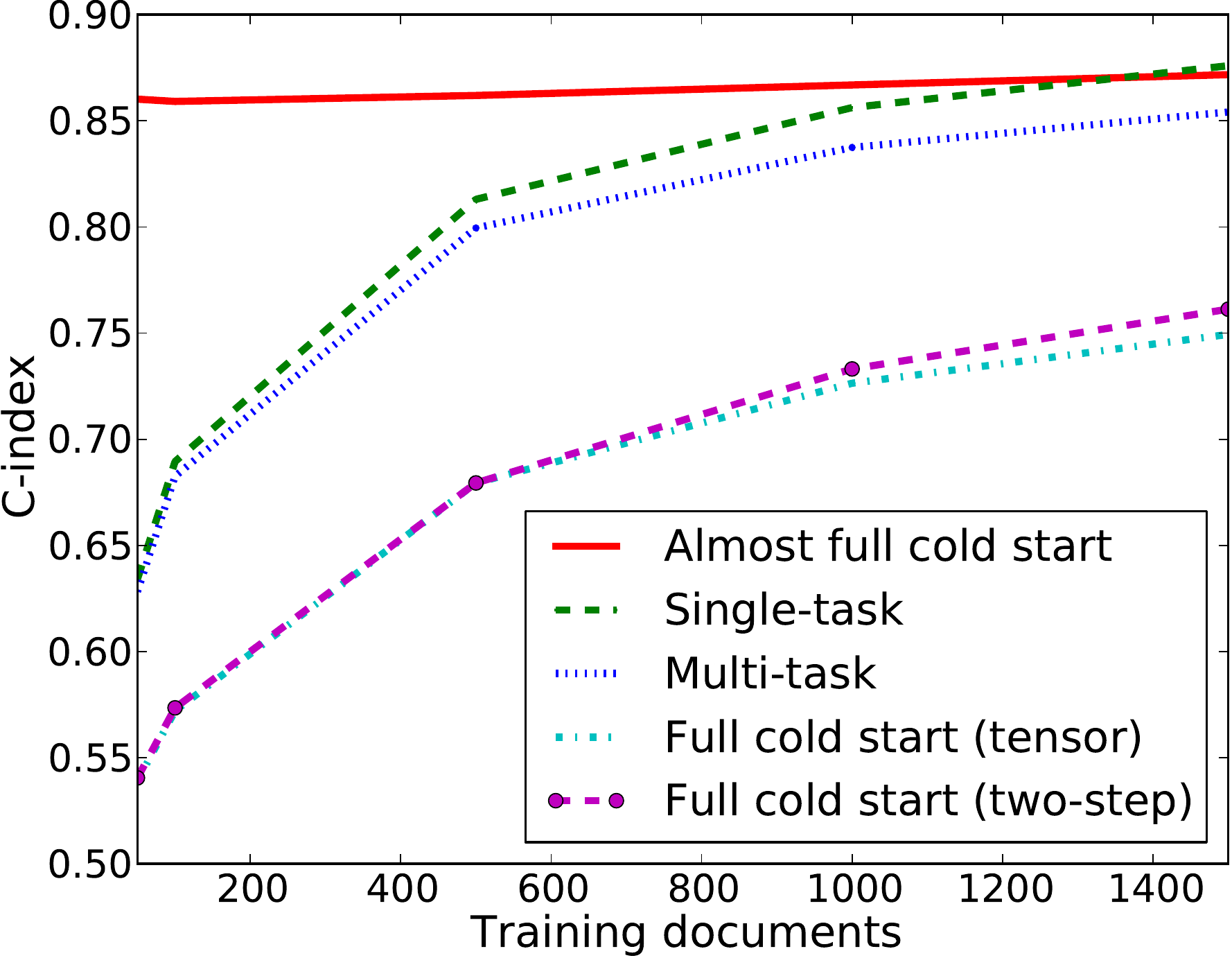}
\includegraphics[width=0.4\linewidth]{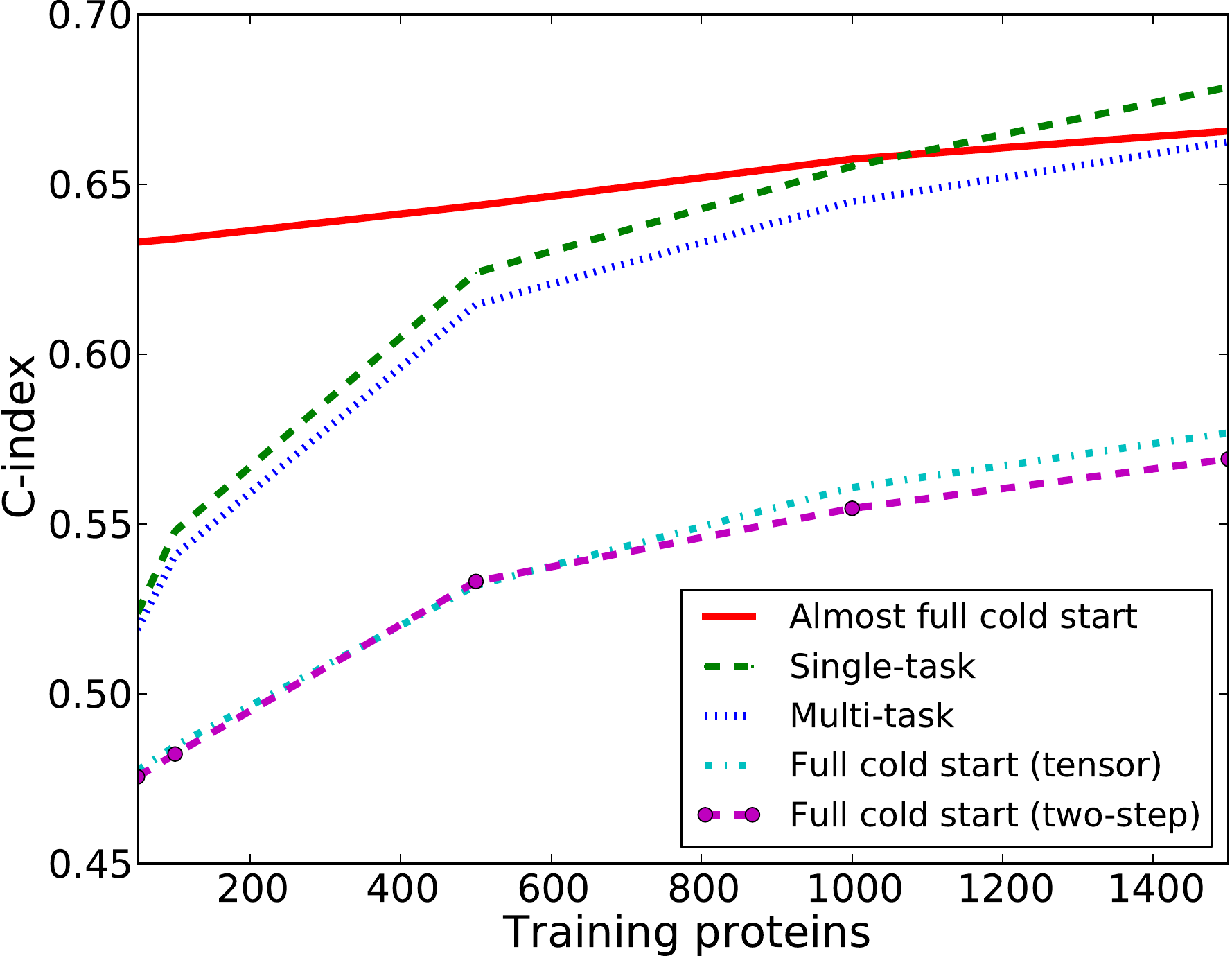}
\end{center}
\caption{Learning curves for the 20 Newsgroups data (left) and the Uniprot data (right).}
\label{ref:figure4}
\end{figure}

The UniProt data was generated by downloading all the protein amino acid sequences with all the gene ontology (GO) annotations of the Universal Protein Resource (UniProt) database. For the amino acid sequences we used the normalized spectrum kernel \citep{leslie2002spectrum}. This kernel is a popular tool for comparing biological sequences without alignments. The normalized spectrum kernel is based on the number of \emph{k}-mers two sequences have in common. In our experiments, \emph{k} was set to three. Two proteins were labeled as 'similar in function' when they had at least one GO term in common. The problem of protein function prediction was thus transformed to a binary classification problem. The experimental setup is the same as for the Newsgroup data, and the results, presented in Figure~\ref{ref:figure4}, are very similar, though at 1500 proteins the performance of the two-step method actually falls below that of the single-task approach.

In all experiments the two-step approach shows itself to be competitive compared to the pairwise learning approach. Previously, \cite{Schrynemackers2013} have in their overview article on dyadic prediction in the biological domain made the observation that in terms of predictive accuracy experimentally there does not seem to be a clear winner between the single-task and multi-task type of learning approaches. Based on our experimental results, a deciding factor on whether one may expect positive transfer from related tasks seems to be based on the amount of data available for the target task. The two-step method performs well in the almost full cold start settings with availability of a significant amount of auxiliary data and only very little data for the target task. But when there is enough data available for the target task, auxiliary data is no longer helpful.





\bibliographystyle{splncs03}
\bibliography{myBibliography}

\end{document}